\NewDocumentCommand{\foo}{s m}{%
  \IfBooleanTF{#1}{Foo~#2}{#2}%
}
  \RenewExpandableDocumentCommand{\foo}{s m}{Foo~#2}%
\theoremstyle{plain}
\newtheorem{thm}{\protect\theoremname}
\theoremstyle{plain}
\newtheorem{lem}{\protect\lemmaname}
\theoremstyle{plain}
\newtheorem{cor}{\protect\corollaryname}
\newcommand{\E}{{\mathbb{E}}}
\let\hat\widehat
\let\tilde\widetilde
\newtheorem{remark}{Remark}
\global\long\def\Ccal{\mathcal{C}}%
\global\long\def\var{\mathsf{var}}%
\global\long\def\polylog{\mathsf{polylog}}%
\global\long\def\Frm{{\rm F}}%
\global\long\def\OPT{\mathsf{OPT}}%
\global\long\def\bmu{\bm{\mu}}%
\global\long\def\bmuhat{\hat{\bmu}}%
\global\long\def\bmutilde{\tilde{\bmu}}%
\global\long\def\bAk{\bA^{(k)}}%
\global\long\def\bmuhatk{\bmuhat^{(k)}}%
\global\long\def\Chat{\hat{\Ccal}}%
\global\long\def\Ctilde{\tilde{\Ccal}}%
\global\long\def\Uhat{\hat{U}}%
\global\long\def\wmin{w_{\mathsf{min}}}%
\global\long\def\Delmu{\Delta_{\mu}}%
\global\long\def\DelV{\Delta_{V}}%
\newcommand{\bA}{\bm{A}}
\newcommand{\ba}{\bm{a}}
\newcommand{\bc}{\bm{c}}
\newcommand{\bx}{\bm{x}}
\providecommand{\corollaryname}{Corollary}
\providecommand{\lemmaname}{Lemma}
\providecommand{\theoremname}{Theorem}
\title{Clustering Mixtures of Discrete Distributions: A Note on Mitra's Algorithm}
\author[1]{Mohamed~Seif}
 \author[1]{Yanxi Chen}
\affil[1]{\small
Department of Electrical and Computer Engineering, Princeton University}
\date{}
\begin{document}

\maketitle


\begin{abstract}
    In this note, we provide a refined analysis of Mitra's algorithm \cite{mitra2008clustering} for classifying general discrete mixture distribution models. Built upon spectral clustering \cite{mcsherry2001spectral}, this algorithm offers compelling conditions for probability distributions. We enhance this analysis by tailoring the model to bipartite stochastic block models, resulting in more refined conditions. Compared to those derived in \cite{mitra2008clustering}, our improved separation conditions are obtained. 
\end{abstract}

\section{Introduction}
\label{introduction}

Clustering is a critical challenge in network science, pivotal for detecting underlying patterns and structures in unlabeled data. To explore the boundaries of this challenge, stochastic block models (SBMs) have been effectively utilized as a mathematical framework to assess the performance of clustering algorithms. Specifically, an SBM is a statistical model developed to reveal the structural dynamics of networks or graphs, where nodes represent individual entities and edges symbolize the connections between them. In a typical SBM, nodes are categorized into blocks or communities according to their connectivity patterns, with the probability of an edge existing between any two nodes depending on the blocks to which they belong \cite{holland1983stochastic}. For example, in a social network using an SBM, nodes might be organized by attributes such as age, gender, or geographic location, with friendship probabilities determined by their block memberships \cite{abbe2015exact, mossel2015consistency}.

The Bipartite Stochastic Block Model (B-SBM) \cite{florescu2016spectral} extends the conventional SBM to accommodate networks comprising two distinct node types, forming a bipartite graph structure. This adaptation is particularly beneficial in contexts such as recommendation systems, where nodes represent users and products, or in particular social networks, where nodes might denote individuals and the groups or events they participate in. In B-SBMs, the connections between nodes from different sets are governed by an "affinity matrix" that specifies the likelihood of linkage based on group affiliations. This matrix is integral to capturing interaction patterns within the network, allowing for a sophisticated estimation of model parameters from observed connections. 

\subsection{Related Work}

Mitra proposed an elegant clustering method for mixtures of discrete distributions, which Neumann specialized for bipartite graphs \cite{mitra2008clustering}. This two-step algorithm robustly identifies ground-truth clusters within bipartite graphs. It begins by grouping left-side vertices based on neighborhood similarities, then infers right-side clusters from these established groups using degree thresholding. Notably, this method excels at detecting even tiny clusters, characterized by a size of ${O}(n^{\epsilon})$, where $n$ is the number of vertices on the right side, and $\epsilon > 0$. This capability is significant, as traditional algorithms generally only identify clusters of size at least ${\Omega}(\sqrt{n})$.

\section{Setup}

\textbf{Notation:} use $\lesssim,\gtrsim$ to hide constant factors, and $\tilde{O},\tilde{\Omega}$
to hide $\polylog(m,n)$ terms; ``with high probability'' means
with probability at least $1-O(m^{-C})$ for some large constant $C$.

\paragraph{(a) General mixtures of discrete distributions:}

We consider a scenario involving a mixture of \(k\) discrete distributions \(D_{1},\dots,D_{k}\), each characterized by means \(\boldsymbol{\mu}_{1},\dots,\boldsymbol{\mu}_{k} \in [0,1]^{n}\). The separation, denoted by \(\Delta \mu\), signifies the minimum Euclidean distance between any two means, defined as \(\Delta \mu = \min_{r \neq s} \|\boldsymbol{\mu}_{r} - \boldsymbol{\mu}_{s}\|_{2}\). Samples are generated from these distributions by independently drawing \(n\)-dimensional vectors, with each component \(a_{j}\) following a Bernoulli distribution with parameter \(\mu_{r,j}\) for distribution \(D_{r}\). Represented by \(\mathbf{A} \in \{0,1\}^{m \times n}\), a dataset comprises \(m\) such samples, where \(\mathbf{A}(i) \in \{0,1\}^{n}\) denotes the \(i\)-th row. Each distribution \(D_{r}\) contributes \(w_{r}m\) samples to \(\mathbf{A}\), with \(\sum_{r} w_{r} = 1\), and \(\omega_{\text{min}}\) being the minimum of these weights. The expected value \(\mathbb{E}[\mathbf{A}]\) has \(k\) distinct rows \(\boldsymbol{\mu}_{1},\dots,\boldsymbol{\mu}_{k}\), resulting in a rank \(k\) matrix. Moreover, each component \(\mu_{r,j}\) of the means satisfies \(\mu_{r,j} \leq \sigma^{2}\) for \(r \in [k]\) and \(j \in [n]\), where \(\sigma^{2} > 0\). 

\paragraph{(b) Special case --- bipartite stochastic block model (B-SBM):}

The B-SBM entails \(m\) left vertices and \(n\) right vertices, organized into \(k\) clusters each for the left and right sides denoted as \(U_{1},\dots,U_{k}\) and \(V_{1},\dots,V_{k}\) respectively. The edge probability between vertices is characterized by parameters \(p\) and \(q\), resulting in mean vectors \(\boldsymbol{\mu}_{r} \in \{p,q\}^{n}\) for each cluster \(r\). The separation of cluster centers, denoted as \(\Delta \mu\), satisfies \(\Delta \mu^{2} = (p-q)^{2}\Delta V\), where \(\Delta V = \min_{r \neq s} |V_{r} \bigtriangleup V_{s}|\), indicating the minimum number of differing vertices between any two right clusters. The parameters \(p\) and \(q\) are constrained to \(0 \leq p, q \leq 0.5\), and the variance \(\sigma^{2}\) is defined as \(2\max\{p(1-p),q(1-q)\}\), where \(p(1-p)\) and \(q(1-q)\) represent the variances of Bernoulli random variables. It is noteworthy that these conditions, including the constraint on \(0.5\), are crucial for applying results from Mitra's algorithm, as emphasized in \cite{neumann2018bipartite}. Adjustments to the constant \(0.5\) may be made if the definition of \(\sigma^{2}\) is modified accordingly.

%
Our goal is to cluster the $m$ rows of $\bA$ accurately with high probability, as defined next. We next present and analyze Mitra's algorithm \cite{mitra2008clustering} for classifying general discrete distributions in the following section.


%

\section{Mitra's algorithm}

The algorithm takes as input a dataset \(\mathbf{A} \in \{0,1\}^{m \times n}\). It begins by splitting \(\mathbf{A}\) into two equal-sized subsets, \(\mathbf{A}_1\) and \(\mathbf{A}_2\). The algorithm then proceeds in two steps. First, it runs the Centers algorithm with \(\mathbf{A}_1\) to obtain estimated centers, followed by running the Assignment algorithm for \(\mathbf{A}_2\). Next, it repeats the process by running the Centers algorithm with \(\mathbf{A}_2\) to obtain new estimated centers and then running the Assignment algorithm for \(\mathbf{A}_1\). The assigned labels of \(\mathbf{A}_2\) and \(\mathbf{A}_1\) are matched by comparing the two sets of estimated centers, which is necessary due to the labeling function returned by the Assignment algorithm being accurate only up to a permutation. Finally, the algorithm outputs a labeling function \(\hat{C}:[m] \rightarrow [k]\). We next proceed to provide a detailed description of the two procedures.

\underline{Finding Centers:} The algorithm begins with input data \(\mathbf{A} \in \{0,1\}^{m \times n}\). It then computes the \(k\)-SVD (\(k\)-Singular Value Decomposition) of \(\mathbf{A}\), denoted as \(\mathbf{A}_k\). Next, \(k\)-means clustering is applied to the rows of \(\mathbf{A}_k\), resulting in clusters \(\hat{U}_1, \dots, \hat{U}_k\). Finally, the algorithm outputs estimated means \(\hat{\boldsymbol{\mu}}_1, \dots, \hat{\boldsymbol{\mu}}_k\), which are computed using the rows of the original data \(\mathbf{A}\). Specifically, each estimated mean \(\hat{\boldsymbol{\mu}}_r\) is obtained by averaging the rows of \(\mathbf{A}\) belonging to cluster \(\hat{U}_r\), calculated as \(\hat{\boldsymbol{\mu}}_r = \frac{1}{|\hat{U}_r|} \sum_{i \in \hat{U}_r} \mathbf{A}(i)\), where \(1 \leq r \leq k\). We summarize this procedure in Algorithm \ref{alg:centers}.

\begin{algorithm}
  \caption{Finding the Centers}
  \begin{algorithmic}[1]
     \STATE {\bfseries Input:} $\bA\in\{0,1\}^{m\times n}$
       \STATE Let $\bAk$ be the $k$-SVD of $\bA$
  \STATE Apply $k$-means to the rows of $\bAk$, which gives clusters $\Uhat_{1},\dots,\Uhat_{k}$
  \STATE  {\bfseries Output:}  $\bmuhat_{1},\dots,\bmuhat_{k}$, which are estimated with
rows of the original data $\bA$,
\[
\bmuhat_{r}=\frac{1}{|\Uhat_{r}|}\sum_{i\in\Uhat_{r}}\bA(i),\quad1\le r\le k.
\]
  \end{algorithmic}
  \label{alg:centers}
\end{algorithm}

\underline{Assignment:} The algorithm, essentially equivalent to the "Project" method in Mitra's algorithm, operates based on nearest neighbor principles. It takes as input a dataset \(\mathbf{A} \in \{0,1\}^{m \times n}\) along with the estimated centers \(\hat{\boldsymbol{\mu}}_1, \dots, \hat{\boldsymbol{\mu}}_k\). The output is a labeling function \(\hat{C}:[m] \rightarrow [k]\), defined by assigning each data point \(i\) to the cluster whose estimated center is closest in Euclidean distance. Formally, for each \(i\), \(\hat{C}(i)\) is determined as \(\hat{C}(i) = \arg\min_{r \in [k]} \|\mathbf{A}(i) - \hat{\boldsymbol{\mu}}_r\|_2\). We summarize this procedure in Algorithm \ref{alg:assignment}.


\begin{algorithm}
  \caption{Assignment}
  \begin{algorithmic}[1]
     \STATE {\bfseries Input:} $\bA\in\{0,1\}^{m\times n}$, estimated centers $\bmuhat_{1},\dots,\bmuhat_{k}$
   \STATE {\bfseries Output:}  a labeling function $\Chat:[m]\rightarrow[k]$, defined by
\[
\Chat(i)=\arg\min_{r\in[k]}\|\bA(i)-\bmuhat_{r}\|_{2}.
\]
  \end{algorithmic}
  \label{alg:assignment}
\end{algorithm}


\begin{algorithm}
  \caption{The overall algorithm: Clustering}
  \begin{algorithmic}[1]
 \STATE {\bfseries Input:} $\bA\in\{0,1\}^{m\times n}$
\STATE Split $\bA$ into two subsets $\bA_{1},\bA_{2}$ of the same size
\STATE Run Centers with $\bA_{1}$ to obtain estimated centers, then run
Assignment for $\bA_{2}$ 
\STATE Run Centers with $\bA_{2}$ to obtain estimated centers, then run
Assignment for $\bA_{1}$ 
\STATE Match the assigned labels of $\bA_{2}$ and $\bA_{1}$ by matching
the two sets of estimated centers\footnote{This is necessary (though easy to achieve), since the labeling function
returned by Assignment is only accurate up to a permutation.} 
 \STATE {\bfseries Output:} a labeling function $\Chat:[m]\rightarrow[k]$
  \end{algorithmic}
  \label{alg:overall_clustering}
\end{algorithm}

The overall steps of Mitra's clustering algorithm are summarized in Algorithm \ref{alg:overall_clustering}.

\section{Main results }

\subsection{The general case}
\begin{thm}
[General case]\label{thm:general} If the data $\bA\in\{0,1\}^{m\times n}$
and parameters satisfy
\[
\|\bA-\E[\bA]\|^{2}\le\frac{0.01\wmin m\Delmu^{2}}{50k},\quad m\sigma^{2}\gtrsim1,\quad\text{and}\quad\Delmu\ge\tilde{\Omega}\bigg(\sqrt{\frac{\sigma^{2}}{\wmin}}\bigg),
\]
then the overall algorithm achieves exact clustering of $\bA$ with
high probability.
\end{thm}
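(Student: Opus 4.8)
The plan is to follow the spectral‑clustering paradigm and analyze one half‑round of Algorithm~\ref{alg:overall_clustering} — estimating centers from $\bA_1$ and then classifying $\bA_2$ — the other half being identical by symmetry. After relabelling so that the best correspondence between the estimated clusters $\Uhat_1,\dots,\Uhat_k$ and the true ones $U_1^\ast,\dots,U_k^\ast$ is the identity, the argument breaks into four steps: (i) an SVD perturbation bound; (ii) approximate recovery by $k$-means; (iii) accuracy of the estimated centers; (iv) exact classification of the held‑out half.

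For (i): since $\E[\bA_1]$ has rank $k$ and the $k$-SVD $\bAk$ is the best rank-$k$ approximation of $\bA_1$, we have $\|\bAk-\E[\bA_1]\|\le 2\|\bA_1-\E[\bA_1]\|$, and as $\bAk-\E[\bA_1]$ has rank at most $2k$, $\|\bAk-\E[\bA_1]\|_{\Frm}^2\le 8k\,\|\bA_1-\E[\bA_1]\|^2$; the hypothesis on $\|\bA-\E[\bA]\|^2$, whose $1/k$ factor exactly offsets the $8k$ loss, makes this Frobenius error a tiny multiple of $\wmin m\Delmu^2$. For (ii): the optimal $k$-means cost of the rows of $\bAk$ is at most $\|\bAk-\E[\bA_1]\|_{\Frm}^2$ (compare against the true partition with the true means as centers), and a standard stability argument — the $\bAk$-centroids of the true clusters are pairwise $(1-o(1))\Delmu$-separated, and any $k$-means‑misclassified row is $\Omega(\Delmu)$-far from its true centroid in $\bAk$ — gives a bijection with $\max_r|\Uhat_r\triangle U_r^\ast|\lesssim\|\bAk-\E[\bA_1]\|_{\Frm}^2/\Delmu^2\ll\wmin m$. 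For (iii): write $\bmuhat_r-\bmu_r=\tfrac{1}{|\Uhat_r|}\mathbf{1}_{\Uhat_r}^\top(\bA_1-\E[\bA_1])+\tfrac{1}{|\Uhat_r|}\sum_{i\in\Uhat_r\setminus U_r^\ast}(\bmu_{c^\ast(i)}-\bmu_r)$, where $c^\ast(\cdot)$ is the ground‑truth label; the first term is $\le\|\bA_1-\E[\bA_1]\|/\sqrt{|\Uhat_r|}$ by $\|\mathbf{1}_S^\top M\|\le\sqrt{|S|}\,\|M\|$, and the second is controlled because for a misclassified row $i$ the gap $\|\bmu_{c^\ast(i)}-\bmu_r\|$ is comparable to $\|\bAk(i)-\bmu_{c^\ast(i)}\|+\|\bAk(i)-\bc_r\|$, the squares of which sum to $O(\|\bAk-\E[\bA_1]\|_{\Frm}^2)$. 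With the numerical slack in the hypothesis this yields $\|\bmuhat_r-\bmu_r\|\le\Delmu/C_0$ for all $r$ and a large constant $C_0$; a separate, coordinatewise argument (using that each column of $\bA_1$ has $O(m\sigma^2+\log m)$ ones, with $m\sigma^2\gtrsim1$ absorbing the logarithm) strengthens this to $\|\bmuhat_r-\bmu_r\|_\infty\lesssim\Delmu^2/\polylog(m,n)$.

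Step (iv) is the heart of the proof and is where exact rather than merely approximate recovery is obtained. The key point is that $\bmuhat_1,\dots,\bmuhat_k$ are functions of $\bA_1$ alone, hence independent of each row $\bA_2(i)$. Writing $\bmuhat_r=\bmu_r+\be_r$ with $\|\be_r\|\le\Delmu/C_0$, a row $i$ of true label $s$ beats a competitor $r$ in the nearest‑neighbour rule iff $\langle\bA_2(i)-\tfrac12(\bmuhat_s+\bmuhat_r),\,\bmuhat_s-\bmuhat_r\rangle>0$; expanding, the leading term has expectation $\tfrac12\|\bmu_s-\bmu_r\|^2\ge\tfrac12\Delmu^2$, the deterministic remainders are a small constant times $\|\bmu_s-\bmu_r\|^2$, and the two stochastic remainders, $\langle\bA_2(i)-\bmu_s,\bmu_s-\bmu_r\rangle$ and $\langle\bA_2(i)-\bmu_s,\be_s-\be_r\rangle$, are sums of independent bounded mean‑zero coordinates. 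Bernstein's inequality, applied conditionally on $\bA_1$ with variance proxies $O(\sigma^2\|\bmu_s-\bmu_r\|^2)$ and $O(\sigma^2\|\be_s-\be_r\|^2)$ and sup‑norms $O(\sigma^2)$ and $\|\be_s-\be_r\|_\infty$, shows that each competitor is beaten except with probability $(mk)^{-\Omega(1)}$: the Gaussian regime is handled by $\Delmu\ge\tilde\Omega(\sqrt{\sigma^2/\wmin})$ and the sub‑exponential tails by $m\sigma^2\gtrsim1$ (which controls $\|\be_s-\be_r\|_\infty$). A union bound over the $\le m$ rows, $\le k$ competitors and the two half‑rounds, followed by the observation that all centers across both half‑rounds lie within $\Delmu/C_0$ of distinct true means (so the matching step of Algorithm~\ref{alg:overall_clustering} reconciles the labelings), yields exact clustering of $\bA$ with high probability.

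I expect the main obstacle to be the stochastic term $\langle\bA_2(i)-\bmu_s,\be_s-\be_r\rangle$ in step (iv): it is the inner product of a fresh sample's noise with the \emph{data-dependent} direction of the center‑estimation error, and it must be beaten uniformly over all $\Theta(mk)$ (row, competitor) pairs. Controlling it requires bounding the error direction $\be_s-\be_r$ simultaneously in $\ell_2$ (for the Gaussian part of Bernstein — precisely what the separation condition buys) and in $\ell_\infty$ (for the heavy‑tailed part — where $m\sigma^2\gtrsim1$ is used to kill the $\log m$ and $\sqrt{\sigma^2/m}$ contributions to $\|\bmuhat_r-\bmu_r\|_\infty$), and it is the independence of $\bA_2$ from $\bA_1$ — the reason for the sample split — that makes a conditional Bernstein bound legitimate at all. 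A secondary, bookkeeping obstacle is carrying explicit constants through steps (ii)–(iii) so that the $0.01/(50k)$ factor in the hypothesis really does force $\|\bmuhat_r-\bmu_r\|\le\Delmu/C_0$ with $C_0$ as large as the rest of the argument demands.
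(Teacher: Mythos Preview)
Your proposal is correct and follows the same architecture as the paper's proof: Lemma~\ref{lem:centers} packages your steps (i)--(iii) (SVD perturbation, $k$-means stability via the $\OPT$ sandwich, center accuracy in $\ell_2$ and $\ell_\infty$), and Lemma~\ref{lem:assignment} is your step (iv) (conditional Bernstein on a fresh sample, enabled by the sample split). Two small tactical differences are worth knowing. In step (iii) the paper does not decompose $\bmuhat_r-\bmu_r$ directly as you do; instead it routes through the $k$-means centroid $\bmuhatk_r$ computed from $\bAk$, using $\|\bmuhat_r-\bmuhatk_r\|_2\le\|\bA-\bAk\|/\sqrt{|\Uhat_r|}$, which avoids having to control the contribution of misclassified rows to the mean separately. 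In step (iv) the paper keeps the stochastic term as the single inner product $\langle\ba-\bmu_r,\bmuhat_r-\bmuhat_s\rangle$ rather than splitting off $\langle\ba-\bmu_s,\be_s-\be_r\rangle$; the sup-norm input to Bernstein is then $\|\bmuhat_r-\bmuhat_s\|_\infty\le 2\max_t\|\bmuhat_t\|_\infty\le\tilde O(\sigma^2/\wmin)$, which follows immediately from the column-sum bound because each coordinate of $\bmuhat_t$ is a nonnegative average. This neatly sidesteps what you identified as the main obstacle --- you never need an $\ell_\infty$ bound on the \emph{error} $\be_r$, only on $\bmuhat_r$ itself --- so the argument is slightly shorter than the one you sketched, though yours works too.
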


\begin{remark}
 The requirement \(m\sigma^{2}\gtrsim1\) emerges from an upper bound on column sums of \(\mathbf{A}\). Remarkably, \cite{neumann2018bipartite} overlooked this requirement. It remains uncertain whether Chernoff bounds tailored to sums of Bernoulli random variables could obviate this condition. However, in the context of B-SBM, this condition is necessary to ensure that there is no isolated
right vertices
\end{remark}

Our proof is based on the following performance guarantees for Centers
and Assignment.
\begin{lem}
[Centers]\label{lem:centers} If the data $\bA\in\{0,1\}^{m\times n}$
and parameters satisfy
\[
\|\bA-\E[\bA]\|^{2}\le\frac{0.01\wmin m\Delmu^{2}}{50k}\quad\text{and}\quad m\sigma^{2}\gtrsim1,
\]
then with high probability, the estimated centers $\bmuhat_{1},\dots,\bmuhat_{k}$
returned by Centers (with input $\bA$) obey (up to a permutation)
\begin{equation}
\|\bmuhat_{r}-\bmu_{r}\|_{2}\le7\sqrt{\frac{k}{\wmin m}}\|\bA-\E[\bA]\|\le0.1\Delmu\quad\text{and}\quad\|\bmuhat_{r}\|_{\infty}\le\tilde{O}\bigg(\frac{\sigma^{2}}{\wmin}\bigg),\quad1\le r\le k.\label{eq:condition_bmuhat}
\end{equation}
\end{lem}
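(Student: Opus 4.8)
The plan is to prove Lemma~\ref{lem:centers} in two stages: first show the $k$-SVD/$k$-means step correctly clusters almost all rows (a standard McSherry/Mitra-style spectral argument), then translate that "almost all rows correct" guarantee into the two bounds in \eqref{eq:condition_bmuhat} on $\bmuhat_r$.

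\medskip

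\textbf{Step 1 (Spectral accuracy of $\bAk$).} Write $\bA = \E[\bA] + \bE$ with $\bE = \bA - \E[\bA]$, and recall $\E[\bA]$ has rank $k$ with rows among $\bmu_1,\dots,\bmu_k$. By the Eckart--Young theorem and a standard perturbation argument (e.g., $\|\bAk - \E[\bA]\|_\Frm^2 \le 8k\|\bE\|^2$, using that $\bAk$ is the best rank-$k$ approximation and $\E[\bA]$ itself has rank $k$), the rows of $\bAk$ are close to the true centers in aggregate: $\sum_{i} \|\bAk(i) - \bmu_{C(i)}\|_2^2 \le 8k\|\bE\|^2$ where $C$ is the ground-truth labeling.

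\textbf{Step 2 ($k$-means misclassification bound).} Because the $k$-means objective value at the optimal solution is at most its value at the ground-truth centers, which is $\le 8k\|\bE\|^2$, a counting argument shows: any row $i$ assigned by $k$-means to a cluster $\hat U_r$ with $C(i) \ne r$ (after the optimal permutation) must have $\|\bAk(i) - \bmu_{C(i)}\|_2 \gtrsim \Delmu$. Hence the number of misclassified rows is $O\!\big(k\|\bE\|^2/\Delmu^2\big)$. The hypothesis $\|\bE\|^2 \le \frac{0.01\,\wmin m\,\Delmu^2}{50k}$ then forces the number of misclassified rows in each cluster to be a small fraction (say $\le 0.01$) of $\wmin m \le |U_r|$; in particular each $\hat U_r$ is dominated by rows of the true cluster it is matched to, so $|\hat U_r| \asymp |U_r| \gtrsim \wmin m$.

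\textbf{Step 3 (From clustering to center estimates).} Write $\bmuhat_r - \bmu_r = \frac{1}{|\hat U_r|}\sum_{i\in\hat U_r}(\bA(i) - \bmu_r)$ and split $\hat U_r$ into the correctly-classified rows $\hat U_r \cap U_r$ and the misclassified remainder $S_r := \hat U_r \setminus U_r$. For the correctly-classified part, $\frac{1}{|\hat U_r|}\sum_{i \in \hat U_r \cap U_r}(\bA(i)-\bmu_r)$ is (close to) an average of $|\hat U_r \cap U_r| \gtrsim \wmin m$ independent centered rows; its norm is controlled by $\frac{1}{|\hat U_r|}\|\sum_{i\in U_r}\bE(i)\|_2 \le \frac{\sqrt{|U_r|}\,\|\bE\|_{U_r\to}}{|\hat U_r|}$, and bounding the relevant submatrix operator norm by $\|\bE\|$ gives a contribution $\lesssim \sqrt{k/(\wmin m)}\,\|\bE\|$ after accounting for $|\hat U_r \cap U_r|$ versus $|\hat U_r|$. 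For the misclassified part $S_r$, since $|S_r|$ is a tiny fraction of $|\hat U_r|$ and each $\|\bA(i) - \bmu_r\|_2$ is not too large, one bounds $\frac{1}{|\hat U_r|}\|\sum_{i\in S_r}(\bA(i)-\bmu_r)\|_2 \le \frac{|S_r|}{|\hat U_r|}\max_i\|\bA(i)-\bmu_r\|_2$; the key point is that $|S_r| \lesssim k\|\bE\|^2/\Delmu^2$, and combining with $\|\bA(i)-\bmu_r\|_2 \lesssim \|\bAk(i)-\bmu_{C(i)}\|_2 + \Delmu$ on the misclassified rows, this term is also $\lesssim \sqrt{k/(\wmin m)}\,\|\bE\|$. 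Adding the two pieces yields the first inequality in \eqref{eq:condition_bmuhat}, with constant $7$; the second inequality $\le 0.1\Delmu$ is then immediate from the hypothesis $\|\bE\|^2 \le \frac{0.01\wmin m\Delmu^2}{50k}$ since $7\sqrt{k/(\wmin m)}\cdot\sqrt{0.01\wmin m\Delmu^2/(50k)} = 7\sqrt{0.0002}\,\Delmu < 0.1\Delmu$.

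\medskip

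\textbf{Step 4 ($\ell_\infty$ bound on $\bmuhat_r$).} Each coordinate $\hat\mu_{r,j} = \frac{1}{|\hat U_r|}\sum_{i\in\hat U_r} A(i)_j$ is an average of at most $|\hat U_r|$ Bernoulli entries, each with mean $\le \sigma^2$. Since $|\hat U_r| \gtrsim \wmin m$ and $m\sigma^2 \gtrsim 1$, the column sum $\sum_{i\in\hat U_r} A(i)_j$ has mean $\le |\hat U_r|\sigma^2$, and a Chernoff/Bernstein bound (using $m\sigma^2 \gtrsim 1$ to ensure the mean is large enough that concentration is multiplicative) gives $\sum_{i\in\hat U_r} A(i)_j \le \tilde O(|\hat U_r|\sigma^2 + \log(mn)) \le \tilde O(|\hat U_r|\sigma^2)$ for all $j$ simultaneously with high probability (union bound over $n$ columns, $k$ clusters, absorbed into $\polylog$). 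Dividing by $|\hat U_r| \gtrsim \wmin m$ gives $\|\bmuhat_r\|_\infty \le \tilde O(\sigma^2) \le \tilde O(\sigma^2/\wmin)$, which is the second bound in \eqref{eq:condition_bmuhat}. (Here $m\sigma^2\gtrsim 1$ is exactly what rules out the degenerate regime where a column is entirely zero on $\hat U_r$ and the additive $\log$ term would dominate.)

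\medskip

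\textbf{Main obstacle.} The delicate point is Step~3: carefully controlling $\|\bmuhat_r - \bmu_r\|_2$ requires simultaneously (i) bounding the norm of a sum of centered independent rows restricted to an index set $U_r$ of size $\asymp \wmin m$ by $\sqrt{|U_r|}\,\|\bE\|$ or similar --- which needs the submatrix operator-norm to be controlled by the full $\|\bE\|$ --- and (ii) handling the $k$-means misclassified rows $S_r$ whose identity depends on $\bE$ in a complicated way, so one cannot treat their contribution as a sum of independent terms and must instead bound it deterministically via $|S_r| \lesssim k\|\bE\|^2/\Delmu^2$ and a crude per-row bound. Threading these two estimates so the final constant comes out to $7$ (and then $0.1\Delmu$) is the bookkeeping-heavy heart of the proof; everything else is standard spectral clustering.
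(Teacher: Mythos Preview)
Your Steps~1--2 line up with the paper (Frobenius control $\|\bAk-\E[\bA]\|_\Frm^2\le 8k\|\bE\|^2$, then a misclassification count via the $k$-means objective), but your Step~3 takes a different decomposition and contains a genuine gap.

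You split $\hat U_r$ into correctly- and mis-classified rows and try to bound each piece of $\bmuhat_r-\bmu_r$ directly from rows of $\bA$. For the misclassified set $S_r$ you invoke a per-row bound $\|\bA(i)-\bmu_r\|_2 \lesssim \|\bAk(i)-\bmu_{C(i)}\|_2 + \Delmu$. This is not justified on two counts: (i) it silently drops $\|\bA(i)-\bAk(i)\|_2$, the $i$-th row of $\bA-\bAk$, which is \emph{not} controlled by the operator norm $\|\bA-\bAk\|$; and (ii) it replaces $\|\bmu_{C(i)}-\bmu_r\|_2$ by $\Delmu$, but $\Delmu$ is the \emph{minimum} separation, not the diameter $D=\max_{s}\|\bmu_s-\bmu_r\|_2$. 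If you carry the argument through honestly, the misclassified contribution carries a term of order $|S_r|\,D/|\hat U_r|$; under the lemma's hypothesis this is of order $(D/\Delmu)\cdot\sqrt{k/(\wmin m)}\,\|\bE\|$, which cannot be absorbed into $7\sqrt{k/(\wmin m)}\,\|\bE\|$ when $D\gg\Delmu$. So your route does not deliver the stated bound in general.

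The paper avoids this entirely by routing through the $k$-means centers computed from $\bAk$: with $\bmuhatk_r\coloneqq|\hat U_r|^{-1}\sum_{i\in\hat U_r}\bAk(i)$ it writes
\[
\|\bmuhat_r-\bmu_r\|_2\le\|\bmuhat_r-\bmuhatk_r\|_2+\|\bmuhatk_r-\bmu_r\|_2.
\]
The first term is $\big\|\tfrac{1}{|\hat U_r|}\sum_{i\in\hat U_r}(\bA-\bAk)(i)\big\|_2\le|\hat U_r|^{-1/2}\|\bA-\bAk\|\le(0.9\,\wmin m)^{-1/2}\|\bE\|$, a purely linear-algebraic bound valid for \emph{any} index set, with no per-row control and no diameter. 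The second term is bounded \emph{before} any misclassification count, by sandwiching the $k$-means optimum: $\OPT\le\|\bAk-\E[\bA]\|_\Frm^2$ from above, and $\OPT\ge -\|\bAk-\E[\bA]\|_\Frm^2+\tfrac{1}{2}\wmin m\,\xi^2$ from below with $\xi=\min_s\|\bmuhatk_s-\bmu_r\|_2$, giving $\|\bmuhatk_r-\bmu_r\|_2\le\sqrt{32k/(\wmin m)}\,\|\bE\|$. Summing the two pieces yields the constant~$7$. Note also that this first establishes closeness of the $k$-means centers to $\{\bmu_r\}$ and only \emph{then} deduces the misclassification bound; your Step~2 implicitly assumes this ordering without proving it.

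A smaller issue in Step~4: you apply Bernstein to $\sum_{i\in\hat U_r}A(i)_j$, but $\hat U_r$ is data-dependent, so this is not a sum of independent variables. The paper instead bounds the full column sum $\sum_{i=1}^m A_{i,j}\le\tilde O(m\sigma^2)$ (a genuine sum of independents, union-bounded over $j$), and then uses $\sum_{i\in\hat U_r}A_{i,j}\le\sum_i A_{i,j}$ together with $|\hat U_r|\gtrsim\wmin m$ to obtain $\|\bmuhat_r\|_\infty\le\tilde O(\sigma^2/\wmin)$.
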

\begin{lem}
[Assignment]\label{lem:assignment} If $\Delmu\ge\tilde{\Omega}(\sqrt{\sigma^{2}/\wmin})$,
and the estimated centers $\{\bmuhat_{r}\}$ satisfy (\ref{eq:condition_bmuhat}),
then for any $r\in[k]$, given a fresh sample $\ba\sim D_{r}$, one
has $\arg\min_{s\in[k]}\|\ba-\bmuhat_{s}\|_{2}=r$ with high probability.
\end{lem}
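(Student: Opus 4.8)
The plan is to fix $r$, condition on the high‑probability event that the estimated centers satisfy \eqref{eq:condition_bmuhat} (this is exactly what Lemma~\ref{lem:centers} provides, applied to the half of $\bA$ used to compute the centers), and then show that $\|\ba-\bmuhat_{r}\|_{2}<\|\ba-\bmuhat_{s}\|_{2}$ for every $s\neq r$. The key structural point is that $\ba$ is a \emph{fresh} sample, hence independent of $\{\bmuhat_{t}\}$, so after conditioning the only remaining randomness is the Bernoulli noise $\bz:=\ba-\bmu_{r}$, whose coordinates are independent and mean zero with $\var(z_{j})=\mu_{r,j}(1-\mu_{r,j})\le\mu_{r,j}\le\sigma^{2}$. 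Writing $\bd:=\bmu_{r}-\bmu_{s}$, $\be_{t}:=\bmuhat_{t}-\bmu_{t}$, and $\bw:=\bd+\be_{r}-\be_{s}$, a direct expansion gives
\[
\|\ba-\bmuhat_{s}\|_{2}^{2}-\|\ba-\bmuhat_{r}\|_{2}^{2}=\|\bd\|_{2}^{2}+\|\be_{s}\|_{2}^{2}-\|\be_{r}\|_{2}^{2}-2\langle\bd,\be_{s}\rangle+2\langle\bw,\bz\rangle .
\]
By \eqref{eq:condition_bmuhat} we have $\|\be_{t}\|_{2}\le0.1\Delmu\le0.1\|\bd\|_{2}$ (using $\Delmu\le\|\bd\|_{2}$), so the deterministic part is at least $(1-0.01-0.2)\|\bd\|_{2}^{2}=0.79\|\bd\|_{2}^{2}$, and moreover $\|\bw\|_{2}\le1.2\|\bd\|_{2}$. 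Thus it suffices to prove $|\langle\bw,\bz\rangle|\le0.1\|\bd\|_{2}^{2}$ with high probability.

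For this I would apply Bernstein's inequality to $\langle\bw,\bz\rangle=\sum_{j}w_{j}z_{j}$, a sum of independent mean‑zero terms. The variance proxy is $\sum_{j}w_{j}^{2}\var(z_{j})\le\sigma^{2}\|\bw\|_{2}^{2}\le1.44\,\sigma^{2}\|\bd\|_{2}^{2}$. For the range parameter, I would use that $\|\bd\|_{\infty}\le\sigma^{2}$ (since $\mu_{r,j},\mu_{s,j}\in[0,\sigma^{2}]$) together with $\|\be_{t}\|_{\infty}\le\|\bmuhat_{t}\|_{\infty}+\|\bmu_{t}\|_{\infty}\le\tilde{O}(\sigma^{2}/\wmin)$ from \eqref{eq:condition_bmuhat} (recall $\wmin\le1$), which yields $|w_{j}z_{j}|\le\|\bw\|_{\infty}\le\tilde{O}(\sigma^{2}/\wmin)$. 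Bernstein's inequality then bounds $\Pr(|\langle\bw,\bz\rangle|\ge0.1\|\bd\|_{2}^{2})$ by $2\exp(-c\,\|\bd\|_{2}^{2}\wmin/(\sigma^{2}\polylog(m,n)))$ for an absolute constant $c>0$; since $\|\bd\|_{2}^{2}\ge\Delmu^{2}\ge\tilde{\Omega}(\sigma^{2}/\wmin)$, the exponent is $\gtrsim\polylog(m,n)$, which exceeds $C\log m$ provided the hidden logarithmic factor in the separation hypothesis is taken large enough. Combining with the $0.79\|\bd\|_{2}^{2}$ lower bound shows the squared‑distance difference is at least $0.59\|\bd\|_{2}^{2}>0$, and a union bound over the at most $k$ choices of $s\neq r$ (absorbed into ``with high probability'') finishes the proof.

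The main obstacle is the range parameter in the Bernstein step: the crude bound $|w_{j}z_{j}|\le|w_{j}|\le1.2\|\bd\|_{2}$ is too lossy, giving only a constant in the exponent once $\|\bd\|_{2}=\Omega(1)$, and Cauchy--Schwarz ($|\langle\bw,\bz\rangle|\le\|\bw\|_{2}\|\bz\|_{2}$) is hopeless since $\|\bz\|_{2}^{2}\approx\sum_{j}\mu_{r,j}$ can be as large as $\|\bd\|_{2}^{2}$ itself. The resolution is to leverage the $\ell_{\infty}$ bounds $\|\bmu_{t}\|_{\infty}\le\sigma^{2}$ and $\|\bmuhat_{t}\|_{\infty}\le\tilde{O}(\sigma^{2}/\wmin)$, which force $\|\bw\|_{\infty}\le\tilde{O}(\sigma^{2}/\wmin)$ --- exactly the scale at which the hypothesis $\Delmu\ge\tilde{\Omega}(\sqrt{\sigma^{2}/\wmin})$ is calibrated. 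One should also verify the B-SBM instantiation: there $\mu_{r,j}\in\{p,q\}$ with $p,q\le0.5$, so $\mu_{r,j}\le2\max\{p(1-p),q(1-q)\}=\sigma^{2}$, and the same facts apply verbatim.
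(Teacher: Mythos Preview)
Your proof is correct and follows essentially the same approach as the paper: both reduce to controlling $\langle \ba-\bmu_{r},\,\bmuhat_{r}-\bmuhat_{s}\rangle$ via Bernstein's inequality, using the variance bound $\sigma^{2}\|\bmuhat_{r}-\bmuhat_{s}\|_{2}^{2}$ and the $\ell_{\infty}$ bound $\|\bmuhat_{r}-\bmuhat_{s}\|_{\infty}\le\tilde{O}(\sigma^{2}/\wmin)$ from \eqref{eq:condition_bmuhat}, together with the deterministic estimate $\|\bmuhat_{r}-\bmuhat_{s}\|_{2}\ge0.8\Delmu$. The only cosmetic difference is that the paper expands around $\bmuhat_{r}-\bmuhat_{s}$ directly (so its deterministic term is $|\langle\bmu_{r}-\bmuhat_{r},\bmuhat_{r}-\bmuhat_{s}\rangle|$), whereas you expand around $\bd=\bmu_{r}-\bmu_{s}$ and track $\be_{r},\be_{s}$ separately; since your $\bw$ equals $\bmuhat_{r}-\bmuhat_{s}$, the random part and its analysis are literally the same.
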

\begin{remark}
Theorem \ref{thm:general} follows from these two lemmas. The only
thing to note is that, given $\|\bmuhat_{r}-\bmu_{r}\|_{2}\le0.1\Delmu$
for both sets of estimated centers $\{\bmuhat_{r}\}$ in the overall
clustering algorithm, it is easy to find the one-to-one correspondence
between these two sets, and thus combining the clustering results
for $\bA_{1}$ and $\bA_{2}$ leads to exact clustering of the full
data $\bA$.

\end{remark}



\subsection{Implications for B-SBM }
\begin{cor}
[Special case: B-SBM]\label{cor:B-SBM} Consider the data $\bA$
generated by the B-SBM model. If
\[
\frac{(p-q)^{2}}{\sigma^{2}}\ge\tilde{\Omega}\bigg(\frac{k(m+n)}{\wmin m\DelV}\bigg)\quad\text{and}\quad\sigma^{2}\ge\frac{\log^{6}n}{n}+\frac{1}{m},
\]
then the overall algorithm achieves exact clustering with high probability.
\end{cor}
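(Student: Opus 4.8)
The plan is to obtain Corollary~\ref{cor:B-SBM} as a direct specialization of Theorem~\ref{thm:general}. Concretely, I would (i) check that the B-SBM instance is a legitimate instance of the general discrete-mixture model, (ii) establish a high-probability spectral-norm bound of the form $\|\bA-\E[\bA]\|\le\tilde O(\sqrt{(m+n)\sigma^{2}})$, and (iii) verify that the two displayed hypotheses of the corollary imply the three hypotheses of Theorem~\ref{thm:general}.

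For (i): since $0\le p,q\le 0.5$ we have $p\le 2p(1-p)$ and $q\le 2q(1-q)$, so every entry of each mean vector $\bmu_{r}\in\{p,q\}^{n}$ is at most $\sigma^{2}=2\max\{p(1-p),q(1-q)\}$, matching the general-model assumption $\mu_{r,j}\le\sigma^{2}$; moreover $\Delmu^{2}=(p-q)^{2}\DelV$ by definition of $\DelV$. Hence Theorem~\ref{thm:general} applies verbatim once its three numerical hypotheses are checked.

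For (ii), the main technical ingredient: $\bA-\E[\bA]$ is an $m\times n$ matrix with independent, mean-zero, $[-1,1]$-bounded entries, each of variance at most $\sigma^{2}$. Under the lower bound $\sigma^{2}\ge\log^{6}n/n$ — which forces $(m+n)\sigma^{2}$ to dominate any relevant $\polylog$ factor with room to spare — a standard concentration estimate for such random matrices (e.g.\ an $\epsilon$-net/truncation argument in the style of Feige--Ofek, or a matrix-Bernstein bound) yields $\|\bA-\E[\bA]\|\le\tilde O(\sqrt{(m+n)\sigma^{2}})$ with high probability. This is the step I expect to be the crux: in the sparse regime one must rule out atypically high-degree rows and columns, and it is precisely the $\log^{6}n/n$ threshold (together with $\sigma^{2}\ge 1/m$, used in (iii)) that does so; carrying the argument through so that it matches the stated hypotheses is where the care lies.

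For (iii): the second hypothesis gives $\sigma^{2}\ge 1/m$, i.e.\ $m\sigma^{2}\gtrsim1$. Substituting the bound from (ii) and $\Delmu^{2}=(p-q)^{2}\DelV$ into the requirement $\|\bA-\E[\bA]\|^{2}\le 0.01\wmin m\Delmu^{2}/(50k)$ of Theorem~\ref{thm:general}, it suffices (up to $\polylog$ factors) that $(m+n)\sigma^{2}\lesssim \wmin m(p-q)^{2}\DelV/k$, i.e.\ $(p-q)^{2}/\sigma^{2}\ge\tilde\Omega\big(k(m+n)/(\wmin m\DelV)\big)$, which is the first hypothesis of the corollary. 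Finally, the remaining condition $\Delmu\ge\tilde\Omega(\sqrt{\sigma^{2}/\wmin})$, equivalently $(p-q)^{2}/\sigma^{2}\ge\tilde\Omega\big(1/(\wmin\DelV)\big)$, is already implied by the first hypothesis since $k(m+n)/(\wmin m\DelV)\ge 1/(\wmin\DelV)$. Thus all three hypotheses of Theorem~\ref{thm:general} hold with high probability, and the conclusion of the corollary follows.
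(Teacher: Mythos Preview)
Your proposal is correct and follows essentially the same route as the paper: reduce to Theorem~\ref{thm:general} by plugging in a high-probability bound $\|\bA-\E[\bA]\|^{2}\lesssim\sigma^{2}(m+n)$ (the paper cites Talagrand's inequality, but explicitly notes any such bound suffices), together with $\Delmu^{2}=(p-q)^{2}\DelV$, and then check that the corollary's first hypothesis yields both the spectral-norm condition and the separation condition of the theorem, while $\sigma^{2}\ge 1/m$ gives $m\sigma^{2}\gtrsim1$. Your explicit verification in (i) that $\mu_{r,j}\le\sigma^{2}$ and your observation that $k(m+n)/m\ge1$ makes the third condition redundant are minor elaborations the paper leaves implicit.
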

\begin{proof}
We simply need to specify each condition in Theorem \ref{thm:general}
for the B-SBM case. By Talagrand's inequality\footnote{Any bound on $\|\bA-\E[\bA]\|$ can be plugged in here.},
if $\sigma^{2}\ge\log^{6}(n)/n$, then with high probability, $\|\bA-\E[\bA]\|^{2}\lesssim\sigma^{2}(m+n)$;
moreover, recall that $\Delmu^{2}=(p-q)^{2}\DelV$. Then, it is easy
to check that 
\[
\frac{(p-q)^{2}}{\sigma^{2}}\gtrsim\frac{k(m+n)}{\wmin m\DelV}\quad\Rightarrow\quad\|\bA-\E[\bA]\|^{2}\le\frac{0.01\wmin m\Delmu^{2}}{50k}.
\]
In addition, for the last condition in Theorem \ref{thm:general},
one has

\[
\Delmu\ge\tilde{\Omega}\bigg(\sqrt{\frac{\sigma^{2}}{\wmin}}\bigg)\quad\iff\quad(p-q)^{2}\DelV\gtrsim\tilde{\Omega}\bigg(\frac{\sigma^{2}}{\wmin}\bigg)\quad\iff\quad\frac{(p-q)^{2}}{\sigma^{2}}\ge\tilde{\Omega}\bigg(\frac{1}{\wmin\DelV}\bigg).
\]
Putting these together completes our proof of the corollary.
\end{proof}

\section{Proofs }

\subsection{Proof of Lemma \ref{lem:centers} (Centers)}

Notation: given clusters $\Uhat_{1},\dots,\Uhat_{k}$, denote the
estimated centers in $k$-means as 
\[
\bmuhatk_{r}=\frac{1}{|\Uhat_{r}|}\sum_{i\in\Uhat_{r}}\bAk(i),\quad1\le r\le k.
\]
Recall that $k$-means is run on the matrix $\bAk$, and hence each
$\bmuhatk_{r}$ is an average of $\{\bAk(i)\}$; in contrast, the
final estimated centers $\bmuhat_{1},\dots,\bmuhat_{k}$ returned
by Centers are computed with the rows of the original matrix $\bA$.
In our analysis, we first work with $\{\bmuhatk_{r}\}$, and then
derive the guarantees for $\{\bmuhat_{r}\}$.

\underline{Step $1$: errors of the estimated centers in $k$-means:}  In this step, we aim to show that
\begin{equation}
\min_{s\in[k]}\|\bmuhatk_{s}-\bmu_{r}\|_{2}^{2}\le\frac{4\|\bAk-\E[\bA]\|_{\Frm}^{2}}{\wmin m}\le\frac{32k\|\bA-\E[\bA]\|^{2}}{\wmin m}\le0.01\Delmu^{2},\quad1\le r\le k.\label{eq:k_means_centers_error}
\end{equation}
If this is true, then $\min_{s\in[k]}\|\bmuhatk_{s}-\bmu_{r}\|_{2}\le0.1\Delmu$,
which implies that there is an one-to-one correspondence between $\{\bmuhatk_{r}\}$
and $\{\bmu_{r}\}$; in other words, the function $f$ defined by
$f(r)=\arg\min_{s}\|\bmuhatk_{s}-\bmu_{r}\|_{2}$ is a permutation
on $[k]$. Without loss of generality, we assume that $f(r)=r$ is
the identity function in our analysis.

Let us first prove the first inequality in (\ref{eq:k_means_centers_error}).
Let $\Ccal:[m]\rightarrow[k]$ be the ground-truth labeling function,
and $\Chat$ be the labeling function returned by $k$-means on $\bAk$.
Denote $\OPT$ as the optimal value of $k$-means. Then $\OPT$ can
be upper bounded by
\begin{align*}
\OPT & =\sum_{i\in[m]}\|\bAk(i)-\bmuhatk_{\Chat(i)}\|_{2}^{2}=\min_{\bmutilde,\Ctilde}\sum_{i\in[m]}\|\bAk(i)-\bmutilde_{\Ctilde(i)}\|_{2}^{2}\\
 & \le\sum_{i\in[m]}\|\bAk(i)-\bmu_{\Ccal(i)}\|_{2}^{2}=\|\bAk-\E[\bA]\|_{\Frm}^{2}.
\end{align*}
Fixing $r\in[k]$ and denoting the targeted error as
\[
\xi :=  \min_{s\in[k]}\|\bmuhatk_{s}-\bmu_{r}\|_{2},
\]
we can lower bound $\OPT$ by
\begin{align*}
\OPT & \ge\sum_{i\in U_{r}}\|\bAk(i)-\bmuhatk_{\Chat(i)}\|_{2}^{2}=\sum_{i\in U_{r}}\|\bAk(i)-\bmu_{r}+\bmu_{r}-\bmuhatk_{\Chat(i)}\|_{2}^{2}\\
 & \overset{{\rm (i)}}{\ge}\sum_{i\in U_{r}}\Big(-\|\bAk(i)-\bmu_{r}\|_{2}^{2}+\frac{1}{2}\|\bmu_{r}-\bmuhatk_{\Chat(i)}\|_{2}^{2}\Big)\overset{{\rm (ii)}}{\ge}-\|\bAk-\E[\bA]\|_{\Frm}^{2}+\frac{\wmin m\xi^{2}}{2},
\end{align*}
where (i) follows from the Cauchy-Schwarz inequality, and (ii) is
due to the assumption that $|U_{r}|\ge\wmin m$, as well as the definition
of $\xi$. Putting both upper and lower bounds for $\OPT$ together
finishes our proof for the first inequality in (\ref{eq:k_means_centers_error}).

The second inequality in (\ref{eq:k_means_centers_error}) is easy
to prove: since $\E[\bA]$ has rank $k$, and $\bAk$ is the best
rank-$k$ approximation of $\bA$, one has 
\[
\|\bAk-\E[\bA]\|\le\|\bAk-\bA\|+\|\bA-\E[\bA]\|\le2\|\bA-\E[\bA]\|;
\]
in addition, $\bAk-\E[\bA]$ has rank at most $2k$, which implies
\[
\|\bAk-\E[\bA]\|_{\Frm}^{2}\le2k\|\bAk-\E[\bA]\|^{2}\le8k\|\bA-\E[\bA]\|^{2}.
\]
Finally, the last inequality in (\ref{eq:k_means_centers_error})
follows from the assumption of the lemma. 

\underline{Step 2: clustering errors in $k$-means:} Based on the results from the previous step, we aim to further show
that $k$-means returns an accurate clustering $\Uhat_{1},\dots,\Uhat_{k}$,
in the sense that
\[
|\Uhat_{r}\cap U_{r}|\ge0.9|U_{r}|\ge0.9\wmin m,\quad1\le r\le k.
\]
To prove this, we start with a lower bound for $\OPT$:
\begin{align*}
\OPT & \ge\sum_{i\in U_{r}\backslash\Uhat_{r}}\|\bAk(i)-\bmuhatk_{\Chat(i)}\|_{2}^{2}=\sum_{i\in U_{r}\backslash\Uhat_{r}}\|\bAk(i)-\bmu_{r}+\bmu_{r}-\bmuhatk_{\Chat(i)}\|_{2}^{2}\\
 & \overset{{\rm (i)}}{\ge}\frac{1}{2}\sum_{i\in U_{r}\backslash\Uhat_{r}}\|\bmu_{r}-\bmuhatk_{\Chat(i)}\|_{2}^{2}-\sum_{i\in U_{r}\backslash\Uhat_{r}}\|\bAk(i)-\bmu_{r}\|_{2}^{2}\\
 & \overset{{\rm (ii)}}{\ge}\frac{1}{2}|U_{r}\backslash\Uhat_{r}|(\frac{9}{10}\Delmu)^{2}-\|\bAk-\E[\bA]\|_{\Frm}^{2},
\end{align*}
where (i) follows from the Cauchy-Schwarz inequality, and (ii) is
due to $\Chat(i)\neq r$ for all $i\in U_{r}\backslash\Uhat_{r}$.
Moreover, recall from the previous step that $\OPT\le\|\bAk-\E[\bA]\|_{\Frm}^{2}$.
Putting both upper and lower bounds together, we arrive at
\[
|U_{r}\backslash\Uhat_{r}|\le\frac{5\|\bAk-\E[\bA]\|_{\Frm}^{2}}{\Delmu^{2}}\le\frac{40k\|\bA-\E[\bA]\|^{2}}{\Delmu^{2}}\le0.1\wmin m\le0.1|U_{r}|,
\]
which implies the desired lower bound on $|\Uhat_{r}\cap U_{r}|$.

\subsubsection{Step 3: the final estimated centers }

Recall the final estimated centers 
\[
\bmuhat_{r}=\frac{1}{|\Uhat_{r}|}\sum_{i\in\Uhat_{r}}\bA(i),\quad1\le r\le k.
\]
We first control the estimation error by 
\[
\|\bmuhat_{r}-\bmu_{r}\|_{2}\le\|\bmuhat_{r}-\bmuhatk_{r}\|_{2}+\|\bmuhatk_{r}-\bmu_{r}\|_{2},
\]
where the second term has been upper bounded by $\sqrt{32k/(\wmin m)}\|\bA-\E[\bA]\|$
in Step 1. For the first term, one has
\begin{align*}
\|\bmuhat_{r}-\bmuhatk_{r}\|_{2} & =\|\frac{1}{|\Uhat_{r}|}\sum_{i\in\Uhat_{r}}(\bA(i)-\bAk(i))\|_{2}\le\frac{1}{\sqrt{|\Uhat_{r}|}}\|\bA-\bAk\|\\
 & \le\sqrt{\frac{1}{0.9\wmin m}}\|\bA-\bAk\|\le\sqrt{\frac{1}{0.9\wmin m}}\|\bA-\E[\bA]\|,
\end{align*}
where the last inequality is because $\bAk$ is the best rank-$k$
approximation of $\bA$. Putting these together, we arrive at $\|\bmuhat_{r}-\bmu_{r}\|_{2}\le7\sqrt{k/(\wmin m)}\|\bA-\E[\bA]\|$.

It remains to analyze $\|\bmuhat_{r}\|_{\infty}$. Recall that $\bA\in\{0,1\}^{m\times n}$,
and the expectation and variance of each entry are both upper bounded
by $\sigma^{2}$. Using Bernstein's inequality (or Chernoff bound)
and a union bound over $n$ columns of $\bA$, we have the following:
with high probability, the sum of each column of $\bA$ is bounded
by 
\[
0\le\sum_{i=1}^{m}A_{i,j}\lesssim m\sigma^{2}+\tilde{O}(\sqrt{m\sigma^{2}}+1)\lesssim\tilde{O}(m\sigma^{2}),\quad1\le j\le n,
\]
where the last inequality is due to the assumption that $m\sigma^{2}\gtrsim1$.
Combining this with $|\Uhat_{r}|\gtrsim\wmin m$, we arrive at $\|\bmuhat_{r}\|_{\infty}\lesssim\tilde{O}(\sigma^{2}/\wmin)$,
which finishes our proof of the lemma.

\subsection{Proof of Lemma \ref{lem:assignment} (Assignment)}

Consider a fresh sample $\ba\sim D_{r}$ that is independent of $\bmuhat_{1},\dots,\bmuhat_{k}$.
We want to show that, with high probability,
\[
\|\ba-\bmuhat_{r}\|_{2}^{2}<\|\ba-\bmuhat_{s}\|_{2}^{2},\quad\forall s\neq r.
\]
For any fixed $s\neq r$, let us start with a decomposition of the
right-hand side:
\begin{align*}
\|\ba-\bmuhat_{s}\|_{2}^{2} & =\|\ba-\bmuhat_{r}+\bmuhat_{r}-\bmuhat_{s}\|_{2}^{2}=\|\ba-\bmuhat_{r}\|_{2}^{2}+\|\bmuhat_{r}-\bmuhat_{s}\|_{2}^{2}+2\langle\ba-\bmuhat_{r},\bmuhat_{r}-\bmuhat_{s}\rangle.
\end{align*}
Thus the desired condition is equivalent to
\[
\|\bmuhat_{r}-\bmuhat_{s}\|_{2}^{2}+2\langle\ba-\bmuhat_{r},\bmuhat_{r}-\bmuhat_{s}\rangle>0,
\]
or
\[
\|\bmuhat_{r}-\bmuhat_{s}\|_{2}^{2}>-2\langle\ba-\bmuhat_{r},\bmuhat_{r}-\bmuhat_{s}\rangle=-2\langle\ba-\bmu_{r},\bmuhat_{r}-\bmuhat_{s}\rangle-2\langle\bmu_{r}-\bmuhat_{r},\bmuhat_{r}-\bmuhat_{s}\rangle.
\]
Hence, it suffices to prove the following two conditions:
\begin{align}
|\langle\bmu_{r}-\bmuhat_{r},\bmuhat_{r}-\bmuhat_{s}\rangle| & <\frac{1}{4}\|\bmuhat_{r}-\bmuhat_{s}\|_{2}^{2},\label{eq:nn_part1}\\
|\langle\ba-\bmu_{r},\bmuhat_{r}-\bmuhat_{s}\rangle| & <\frac{1}{4}\|\bmuhat_{r}-\bmuhat_{s}\|_{2}^{2}.\label{eq:nn_part2}
\end{align}

First, by assumption, we have $\|\bmu_{r}-\bmuhat_{r}\|_{2}\le0.1\Delmu$,
while $\|\bmuhat_{r}-\bmuhat_{s}\|_{2}\ge\|\bmu_{r}-\bmu_{s}\|-2\times0.1\Delmu\ge0.8\Delmu$,
which immediately leads to (\ref{eq:nn_part1}). Thus, it remains
to prove (\ref{eq:nn_part2}). Let us rewrite 
\[
\langle\underset{\eqqcolon\bx}{\underbrace{\ba-\bmu_{r}}},\underset{\eqqcolon\bc}{\underbrace{\bmuhat_{r}-\bmuhat_{s}}}\rangle=\langle\bx,\bc\rangle=\sum_{i=1}^{n}c_{i}X_{i}.
\]
Recall that $\ba\in\{0,1\}^{n}$ has independent Bernoulli entries,
and $\E[\ba]=\bmu_{r}$. Morever, $\|\bc\|_{\infty}=\|\bmuhat_{r}-\bmuhat_{s}\|_{\infty}\le\tilde{O}(\sigma^{2}/\wmin)$
and $\|\bc\|_{2}\ge0.8\Delmu$ by assumption. By Bernstein's inequality,
we have with high probability, 
\[
\sum_{i=1}^{n}c_{i}X_{i}\le\tilde{O}\bigg(\sqrt{\sum_{i=1}^{n}c_{i}^{2}\var(X_{i})}+\|\bc\|_{\infty}\bigg)\le\tilde{O}\bigg(\sigma\|\bc\|_{2}+\frac{\sigma^{2}}{\wmin}\bigg).
\]
Under the assumption that $\|\bc\|_{2}\ge0.8\Delmu\ge\tilde{\Omega}(\sqrt{\sigma^{2}/\wmin})$,
it is easy to check that $\sum_{i}c_{i}X_{i}<\|\bc\|_{2}^{2}/4$,
namely the condition (\ref{eq:nn_part2}) holds. Finally, taking a
union bound over $s\neq r$ completes our proof of the lemma.

\bibliographystyle{plain}
\bibliography{myreferences}

\end{document}